\def\BState{\State\hskip-\ALG@thistlm}
\newtheorem{theorem}{Theorem}[section]
\newtheorem{lemma}[theorem]{Lemma}
\newenvironment{proof}[1][Proof]{\begin{trivlist}
\item[\hskip \labelsep {\bfseries #1}]}{\end{trivlist}}
\newcommand\Algphase[1]{%
\vspace*{-.2\baselineskip}\Statex\hspace*{\dimexpr-\algorithmicindent-2pt\relax}\rule{1\textwidth}{0.4pt}%
\Statex\hspace*{-\algorithmicindent}\textbf{#1}%
\vspace*{-.2\baselineskip}\Statex\hspace*{\dimexpr-\algorithmicindent-2pt\relax}\rule{1\textwidth}{0.4pt}%
}
\newcommand{\indicator}[1]{\mathds{1}_{\left[ {#1} \right] }}
\def \bx{\mathbf x}
\def \cT{\mathcal{T}}
\title{Pruning Random Forests for Prediction on a Budget}
\author{
  Feng Nan \\
  Systems Engineering\\
  Boston University\\
  \texttt{fnan@bu.edu} \\
  \And
  Joseph Wang \\
  Electrical Engineering \\
  Boston University\\
  \texttt{joewang@bu.edu} \\  
  \And
  Venkatesh Saligrama \\
  Electrical Engineering\\
  Boston University \\
  \texttt{srv@bu.edu} \\
}
\begin{document}

\maketitle

\begin{abstract} 
We propose to prune a random forest (RF) for resource-constrained prediction. We first construct a RF and then prune it to optimize expected feature cost \& accuracy. We pose pruning RFs as a novel 0-1 integer program with linear constraints that encourages feature re-use. We establish total unimodularity of the constraint set to prove that the corresponding LP relaxation solves the original integer program. We then exploit connections to combinatorial optimization and develop an efficient primal-dual algorithm, scalable to large datasets. In contrast to our bottom-up approach, which benefits from good RF initialization, conventional methods are top-down acquiring features based on their utility value and is generally intractable, requiring heuristics. 
Empirically, our pruning algorithm outperforms existing state-of-the-art resource-constrained algorithms. \vspace{-0.21in}
\end{abstract}

\section{Introduction}
%
Many modern classification systems, including internet applications (such as web-search engines, recommendation systems, and spam filtering) and security \& surveillance applications (such as wide-area surveillance and classification on large video corpora), face the challenge of prediction-time budget constraints~\cite{xu2013cost}. Prediction-time budgets can arise due to monetary costs associated with acquiring information or computation time (or delay) involved in extracting features and running the algorithm. 
We seek to learn a classifier by training on fully annotated training datasets that maintains high-accuracy while meeting average resource constraints during prediction-time. We consider a system that adaptively acquires features as needed depending on the instance(example) for high classification accuracy with reduced feature acquisition cost.

\noindent
We propose a two-stage algorithm. In the first stage, we train a random forest (RF) of trees using an impurity function such as entropy or more specialized cost-adaptive impurity \cite{icml2015_nan15}. Our second stage takes a RF as input and attempts to jointly prune each tree in the forest to meet global resource constraints. During prediction-time, an example is routed through all the trees in the ensemble to the corresponding leaf nodes and the final prediction is based on a majority vote. The total feature cost for a test example is the sum of acquisition costs of \emph{unique} features\footnote{When an example arrives at an internal node, the feature associated with the node is used to direct the example. If the feature has never been acquired for the example an acquisition cost is incurred. Otherwise, no acquisition cost is incurred as we assume that feature values are stored once computed.} acquired for the example in the entire ensemble of trees in the forest. 
\footnote{For time-sensitive cases such as web-search we parallelize the implementation by creating parallel jobs across all features and trees. We can then terminate jobs based on what features  are returned.}

We derive an efficient scheme to learn a \emph{globally optimal pruning} of a RF minimizing the empirical error and incurred average costs. We formulate the pruning problem as a 0-1 integer linear program that incorporates feature-reuse constraints. By establishing total unimodularity of the constraint set, we show that solving the linear program relaxation of the integer program yields the optimal solution to the integer program resulting in a \emph{polynomial time algorithm for optimal pruning}. We develop a primal-dual algorithm by leveraging results from network-flow theory for scaling the linear program to large datasets. Empirically, this pruning outperforms state-of-the-art resource efficient algorithms on benchmarked datasets.
\noindent
\begin{wraptable}{r}{83mm}
\setlength\tabcolsep{4pt}
\vspace{-.2cm}
\centering
    \begin{tabular}{|l|c|c|c|c|c|}
    \hline
    & No Usage & 1--7 & > 7 & Cost & Error \\ \hline
      Unpruned RF & 7.3\% & 91.7\% & 1\% & 42.0 & 6.6\% \\ \hline
       BudgetPrune & 68.3\% & 31.5\% & 0.2\% & 24.3 & 6.7\% \\ \hline
    \end{tabular}
    \caption{\footnotesize  Typical feature usage in a 40 tree RF before and after pruning (our algorithm) on the MiniBooNE dataset. Columns 2-4 list percentage of test examples that do not use the feature, use it 1 to 7 times, and use it greater than 7 times, respectively. Before pruning, 91\% examples use the feature only a few (1 to 7) times, paying a significant cost for its acquisition; after pruning, 68\% of the total examples no longer use this feature, reducing cost with minimal error increase. Column 5 is the average feature cost (the average number of unique features used by test examples). Column 6 is the test error of RFs. Overall, pruning dramatically reduces average feature cost while maintaining the same error level.
    } \label{table}
    \vspace{-.35cm}
\end{wraptable}
Our approach is motivated by the following considerations: \\ ({\bf i}) RFs are scalable to large datasets and produce flexible decision boundaries yielding high prediction-time accuracy. The sequential feature usage of decision trees lends itself to adaptive feature acquisition. \\({\bf ii}) RF feature usage is superfluous, utilizing features with introduced randomness to increase diversity and generalization. Pruning can yield significant cost reduction with negligible performance loss by selectively pruning features sparsely used across trees, leading to cost reduction with minimal accuracy degradation (due to majority vote). See Table \ref{table}. \\ ({\bf iii}) Optimal pruning encourages examples to use features either a large number of times, allowing for complex decision boundaries in the space of those features, or not to use them at all, avoiding incurring the cost of acquisition. It enforces the fact that once a feature is acquired for an example, repeated use incurs no additional acquisition cost. Intuitively, features should be repeatedly used to increase discriminative ability without incurring further cost. \\ ({\bf iv}) Resource constrained prediction has been conventionally viewed as a top-down (tree-growing) approach, wherein new features are acquired based on their utility value. This is often an intractable problem with combinatorial (feature subsets) and continuous components (classifiers) requiring several relaxations and heuristics. In contrast, ours is a bottom-up approach that starts with good initialization (RF) and prunes to realize optimal cost-accuracy tradeoff. Indeed, while we do not pursue it, our approach can also be used in conjunction with existing approaches.
\textbf{Related Work:} Learning decision rules to minimize error subject to a budget constraint during prediction-time is an area of recent interest, with many approaches proposed to solve the prediction-time budget constrained problem \cite{Fastmarginbasedcostsensitiveclassification_NanWTS14,AnLPSequentialLearningUnderBudgets_WangTS14,wang2014model,wang2014lp,Gao+Koller:NIPS11,DBLP:conf/icml/XuWC12,trapeznikov:2013b, NIPS2015_5982,ASTC_AAAI14}. These approaches focus on learning complex adaptive decision functions and can be viewed as orthogonal to our work. Conceptually, these are top-down ``growing'' methods as we described earlier (see ({\bf iv})). Our approach is bottom-up that seeks to prune complex classifiers to tradeoff cost vs. accuracy.

Our work is based on RF classifiers \cite{breiman}. Traditionally, feature cost is not incorporated when constructing RFs, however recent work has involved approximation of budget constraints to learn budgeted RFs \cite{icml2015_nan15}. The tree-growing algorithm in \cite{icml2015_nan15} does not take feature re-use into account. Rather than attempting to approximate the budget constraint during tree construction, our work focuses on pruning ensembles of trees subject to a budget constraint. Methods such as traditional ensemble learning and budgeted random forests can be viewed as complementary. 

Decision tree pruning has been studied extensively to improve generalization performance, we are not aware of any existing pruning method that takes into account the feature costs. A popular method for pruning to reduce generalization error is Cost-Complexity Pruning (CCP), introduced by Breiman et al. \cite{breiman1984classification}. CCP trades-off classification ability for tree size, however it does not account for feature costs. As pointed out by Li et al. \cite{Li2001DPP}, CCP has undesirable ``jumps" in the sequence of pruned tree sizes. To alleviate this, they proposed a Dynamic-Program-based Pruning (DPP) method for binary trees. The DPP algorithm is able to obtain optimally pruned trees of all sizes; however, it faces the curse of dimensionality when pruning an ensemble of decision trees and taking  feature cost into account. \cite{integerPruneZhang,OptimalPruning2009} proposed to solve the pruning problem as a 0-1 integer program; again, their formulations do not account for feature costs that we focus on in this paper. The coupling nature of feature usage makes our problem much harder. In general pruning RFs is not a focus of attention as it is assumed that overfitting can be avoided by constructing an ensemble of trees. While this is true, it often leads to extremely large prediction-time costs. Kulkarni and Sinha \cite{pruningRFSurvey} provide a survey of methods to prune RFs in order to reduce ensemble size. However, these methods do not explicitly account for feature costs.


\section{Learning with Resource Constraints}
In this paper, we consider solving the Lagrangian relaxed problem of learning under prediction-time resource constraints, also known as the error-cost tradeoff problem:
\begin{equation}\label{eq:budgetProb}
\min_{f \in \mathcal{F}}E_{(x,y)\sim\mathcal{P}}\left[err\left(y,f(x)\right)\right]+ \lambda E_{x\sim \mathcal{P}_x}\left[C\left(f,x\right)\right],
\end{equation}
where example/label pairs $(x,y)$ are drawn from a distribution $\mathcal{P}$; $err(y,\hat{y})$ is the error function; $C(f,x)$ is the cost of evaluating the classifier $f$ on example $x$; $\lambda$ is a tradeoff parameter. A larger $\lambda$ places a larger penalty on cost, pushing the classifier to have smaller cost. By adjusting $\lambda$ we can obtain a classifier satisfying the budget constraint. The family of classifiers $\mathcal{F}$ in our setting is the space of RFs, and each RF $f$ is composed of $T$ decision trees $\mathcal{T}_1,\dots,\mathcal{T}_T$. 

\noindent
{\bf Our approach:} Rather than attempting to construct the optimal ensemble by solving Eqn. \eqref{eq:budgetProb} directly, we instead propose a two-step algorithm that first constructs an ensemble with low prediction error, then prunes it by solving Eqn. \eqref{eq:budgetProb} to produce a pruned ensemble given the input ensemble. By adopting this two-step strategy, we obtain an ensemble with low expected cost while simultaneously preserving the low prediction error. 

There are many existing methods to construct RFs, however the focus of this paper is on the second step, where we propose a novel approach to prune RFs to solve the tradeoff problem Eqn.\eqref{eq:budgetProb}. 
Our pruning algorithm is capable of taking any RF as input, offering the flexibility to incorporate any state-of-the-art RF algorithm.

\section{Pruning with Costs} 
In this section, we treat the error-cost tradeoff problem Eqn. \eqref{eq:budgetProb} as an RF pruning problem. Our key contribution is to formulate pruning as a 0-1 integer program with totally unimodular constraints.

We first define notations used throughout the paper. A training sample $S=\{(\bx^{(i)},y^{(i)}):{i=1,\dots,N}\}$ is generated i.i.d. from an unknown distribution,  where $\bx^{(i)} \in \Re^K$ is the feature vector with a cost assigned to each of the $K$ features and $y^{(i)}$ is the label for the $i^{\mbox{th}}$ example. In the case of multi-class classification $y \in \{1,\dots,M\}$, where $M$ is the number of classes. Given a decision tree $\mathcal{T}$, we index the nodes as $h\in \{1,\dots,|\cT|\}$, where node $1$ represents the root node. Let $\tilde{\cT}$ denote the set of leaf nodes of tree $\cT$.
Finally, the corresponding definitions for $\mathcal{T}$ can be extended to an ensemble of $T$  decision trees $\{\mathcal{T}_t :t=1,\dots,T\}$ by adding an subscript $t$. 

\noindent
{\bf Pruning Parametrization:}
In order to model ensemble pruning as an optimization problem, we parametrize the space of all prunings of an ensemble.
The process of pruning a decision tree $\mathcal{T}$ at an internal node $h$ involves collapsing the subtree of $\cT$ rooted at $h$, making $h$ a leaf node. We say a pruned tree $\mathcal{T}^{(p)}$ is a valid pruned tree of $\mathcal{T}$ if (1) $\mathcal{T}^{(p)}$ is a subtree of $\mathcal{T}$ containing root node 1 and (2) for any $h\neq 1$ contained in $\mathcal{T}^{(p)}$, the sibling nodes (the set of nodes that share the same immediate parent node as $h$ in $\cT$) must also be contained in $\mathcal{T}^{(p)}$. 
Specifying a pruning is equivalent to specifying the nodes that are leaves in the pruned tree. We therefore introduce the following binary variable for each node $h\in \cT$
$$
z_h=\left\{
\begin{array}{rl}
1 & \text{if node } h \text{ is a leaf in the pruned tree} ,\\
0 & \text{otherwise}.
\end{array} \right.
$$
We call the set $\{z_h, \forall h \in \cT\}$ the node variables as they are associated with each node in the tree. Consider any root-to-leaf path in a tree $\cT$, there should be exactly one node in the path that is a leaf node in the pruned tree. Let $p(h)$ denote the set of predecessor nodes, the set of nodes (including $h$) that lie on the path from the root node to $h$. 
The set of valid pruned trees can be represented as the set of node variables satisfying the following set of constraints:
$\sum_{u\in p(h)} z_u=1 \quad \forall h \in \tilde{\mathcal{T}}$.
Given a valid pruning for a tree, we now seek to parameterize the error of the pruning.

\noindent
{\bf Pruning error: } As in most supervised empirical risk minimization problems, we aim to minimize the error on training data as a surrogate to minimizing the expected error. In a decision tree $\cT$, each node $h$ is associated with a predicted label corresponding to the majority label among the training examples that fall into the node $h$. Let $S_h$ denote the subset of examples in $S$ routed to or through node $h$ on $\mathcal{T}$ and let $\text{Pred}_h$ denote the predicted label at $h$. The number of misclassified examples at $h$ is therefore $e_h=\sum_{i\in S_h} \indicator{y^{(i)}\neq \text{Pred}_h}$. We can thus estimate the error of tree $\cT$ in terms of the number of misclassified examples in the leaf nodes: $\frac{1}{N}\sum_{h\in \tilde{\cT}}e_h$, where $N=|S|$ is the total number of examples. 

Our goal is to minimize the expected test error of the trees in the random forest, which we empirically approximate based on the aggregated probability distribution in Step~\eqref{algo:predictionRule} of Algorithm~\ref{algo:BudgetPrune} with $\frac{1}{TN}\sum_{t=1}^{T}\sum_{h\in \tilde{\cT_t}}e_h$.
We can express this error in terms of the node variables:
$\frac{1}{TN}\sum_{t=1}^{T}\sum_{h\in \cT_t}e_h z_h$.

\noindent
{\bf Pruning cost:} 
Assume the acquisition cost for the $K$ features, $\{c_k:k=1,\dots,K\}$, are given. The feature acquisition cost incurred by an example is the sum of the acquisition costs of unique features acquired in the process of running the example through the forest. This cost structure arises due to the assumption that an acquired feature is cached and subsequent usage by the same example incurs no additional cost.
Formally, the feature cost of classifying an example $i$ on the ensemble $\cT_{[T]}$ is given by 
$C_{\text{feature}}(\cT_{[T]},\bx^{(i)}) =\sum_{k=1}^{K}c_k w_{k,i}$, where the binary variables $w_{k,i}$ serve as the indicators:
$$
w_{k,i}=\left\{\begin{array}{rl}
1 & \text{ if feature } k \text{ is used by }\bx^{(i)} \text{ in any } \cT_t, t=1,\dots,T\\
0 & \text{ otherwise}.
\end{array} \right.
$$
The expected feature cost of a test example can be approximated as $\frac{1}{N}\sum_{i=1}^{N}\sum_{k=1}^{K}c_k w_{k,i}$. 

In some scenarios, it is useful to account for computation cost along with feature acquisition cost during prediction-time. In an ensemble, this corresponds to the expected number of Boolean operations required running a test through the trees, which is equal to the expected depth of the trees. This can be modeled as $\frac{1}{N}\sum_{t=1}^{T}\sum_{h\in \cT_t} |S_h| d_h z_h$, where $d_h$ is the depth of node $h$.


\noindent
{\bf Putting it together:} Having modeled the pruning constraints, prediction performance and costs, 
we formulate the problem of pruning using the relationship between the node variables $z_h$'s and feature usage variables $w_{k,i}$'s. 
Given a tree $\cT$, feature $k$, and example $\bx^{(i)}$, let $u_{k,i}$ be the first node associated with feature $k$ on the root-to-leaf path the example follows in $\cT$. Feature $k$ is used by  $\bx^{(i)}$ if and only if none of the nodes between the root and $u_{k,i}$ is a leaf. We represent this by the constraint
$w_{k,i}+\sum_{h\in p(u_{k,i})} z_h = 1$
for every feature $k$ used by example $x^{(i)}$ in $\cT$. Recall $w_{k,i}$ indicates whether or not feature $k$ is used by example $i$ and $p(u_{k,i})$ denotes the set of predecessor nodes of $u_{k,i}$. Intuitively, this constraint says that either the tree is pruned along the path followed by example $i$ before feature $k$ is acquired, in which case $z_h=1$ for some node $h\in p(u_{k,i})$ and $w_{k,i}=0$; or $w_{k,i}=1$, indicating that feature $k$ is acquired for example $i$. We extend the notations to ensemble pruning with tree index $t$: $z^{(t)}_h$ indicates whether node $h$ in $\cT_t$ is a leaf after pruning; $w^{(t)}_{k,i}$ indicates whether feature $k$ is used by the $i^{\mbox{th}}$ example in $\cT_t$; $w_{k,i}$ indicates whether feature $k$ is used by the $i^{\mbox{th}}$ example in any of the $T$ trees $\cT_1,\dots,\cT_T$; $u_{t,k,i}$ is the first node associated with feature $k$ on the root-to-leaf path the example follows in $\cT_t$; $K_{t,i}$ denotes the set of features the $i^{\mbox{th}}$ example uses on tree $\cT_t$. We arrive at the following integer program. 
\begin{equation*}
\hspace{-.5cm}\begin{array}{rlll}
\displaystyle \min_{\substack{z^{(t)}_h, w^{(t)}_{k,i}, w_{k,i} \in \{0,1\}}} &  \multicolumn{2}{l}{\overbrace{\frac{1}{NT}\displaystyle  \sum_{t=1}^{T}\sum_{h\in \mathcal{T}_t} e^{(t)}_h z^{(t)}_h}^{\mathclap{\text{error}}} +\lambda \left( \overbrace{\sum_{k=1}^{K}c_k(\frac{1}{N}\sum_{i=1}^{N}w_{k,i})}^{\mathclap{\text{feature acquisition cost}}} + \overbrace{\displaystyle \frac{1}{N}\sum_{t=1}^{T}\sum_{h\in \cT_t} |S_h| d_h z_h}^{\mathclap{\text{computational cost}}} \right) } \quad \textbf{(IP)}\\
\textrm{s.t.} &  \sum_{u\in p(h)} z^{(t)}_u=1, & \forall h \in \tilde{\mathcal{T}}_t, \forall t \in [T],  \qquad \quad \text{(feasible prunings)} \\
&  w^{(t)}_{k,i}+ \sum_{h\in p(u_{t,k,i})} z^{(t)}_h=1 , & \forall k\in K_{t,i},\forall i\in S, \forall t \in [T],  \text{ (feature usage/ tree)}\\
& w^{(t)}_{k,i} \leq w_{k,i}, & \forall k\in [K], \forall i\in S, \forall t\in [T].   \text{ (global feature usage)}
\end{array}
\end{equation*}


\noindent
{\bf Totally Unimodular constraints: } Even though integer programs are NP-hard to solve in general, we show that {\bf (IP)} can be solved exactly by solving its LP relaxation. We prove this in two steps: first, we examine the special structure of the equality constraints; then we examine the inequality constraint that couples the trees. Recall that a network matrix is one with each column having exactly one element equal to 1, one element equal to -1 and the remaining elements being 0. A network matrix defines a directed graph with the nodes in the rows and arcs in the columns. We have the following lemma.
\begin{lemma}\label{lemma1}
The equality constraints in {\bf (IP)} can be turned into an equivalent network matrix form for each tree.
\end{lemma}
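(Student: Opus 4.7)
The plan is to turn the tree's equality system into a network-matrix system via an invertible row operation after a careful reordering. To do so, I would first place both kinds of equality constraints for tree $\cT_t$ on a common footing. A ``feasible pruning'' constraint, indexed by a leaf $h \in \tilde{\cT}_t$, reads $\sum_{u \in p(h)} z^{(t)}_u = 1$. A ``feature usage'' constraint, indexed by $(k,i)$, reads $w^{(t)}_{k,i} + \sum_{h \in p(u_{t,k,i})} z^{(t)}_h = 1$. I would augment $\cT_t$ by attaching, to each node $u_{t,k,i}$, a distinct virtual leaf representing the associated feature-usage constraint. In the resulting extended tree, every equality constraint corresponds uniquely to a leaf; each variable $z^{(t)}_h$ appears with coefficient $1$ in exactly the constraints whose (extended) leaf lies in the subtree rooted at $h$; and each $w^{(t)}_{k,i}$ appears with coefficient $1$ in exactly the single constraint at its own virtual leaf.

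Next I would order the equality constraints according to a depth-first traversal of the extended tree. A standard property of DFS orderings is that the leaves contained in the subtree rooted at any node appear consecutively. Therefore, with this row ordering, the column of each $z^{(t)}_h$ in the constraint matrix is a contiguous block of $1$'s, while each $w^{(t)}_{k,i}$ column contains a single $1$; in other words, the matrix has the consecutive-ones property.

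Finally I would apply the classical consecutive-ones-to-network transformation: extend the system with two zero rows $r_0 = 0$ and $r_{m+1} = 0$, and replace each row $r_j$ (for $j = 1, \ldots, m+1$) by $r_j - r_{j-1}$. This is an invertible unimodular row operation and hence yields an equivalent system. Each $z^{(t)}_h$ column now has exactly one $+1$ (at the top of its block) and exactly one $-1$ (immediately past the bottom of its block); each $w^{(t)}_{k,i}$ column similarly has one $+1$ and one $-1$ in adjacent rows; and the right-hand side collapses to $+1$ in the first row, $-1$ in the last row, and $0$ elsewhere. This is exactly the flow-balance system of a directed graph, i.e., a network matrix.

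The main subtlety I anticipate is ensuring the contiguity property holds uniformly when several feature-usage constraints attach to the same node $u_{t,k,i}$ or to nested ancestors on a common root-to-leaf path. This is resolved by fixing any consistent DFS that, upon visiting each node, emits all virtual leaves attached to that node before recursing into its real children; once this convention is fixed, every extended subtree's leaves are contiguous in the ordering and the interval structure -- hence the network form -- follows immediately.
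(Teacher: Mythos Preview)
Your proposal is correct and is essentially the same argument as the paper's: the paper likewise treats each $w^{(t)}_{k,i}$ as a ``fictitious child node'' of $u_{t,k,i}$, arranges the resulting leaves in pre-order (DFS), and applies consecutive row differences to obtain the network matrix (deferring the interval/consecutive-ones step to Proposition~3 of~\cite{OptimalPruning2009}). Your write-up is more explicit and self-contained---in particular your remark about emitting all virtual leaves at a node before recursing is a detail the paper glosses over---but the method is identical.
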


\begin{proof}
We observe the first constraint $\sum_{u\in p(h)} z^{(t)}_u=1$ requires the sum of the node variables along a path to be 1. The second constraints $w^{(t)}_{k,i}+ \sum_{h\in p(u_{t,k,i})} z^{(t)}_h=1$ has a similar sum except the variable $w^{(t)}_{k,i}$. Imagine $w^{(t)}_{k,i}$ as yet another node variable for a fictitious child node of $u_{t,k,i}$ and the two equations are essentially equivalent. The rest of proof follows directly from the construction in Proposition 3 of \cite{OptimalPruning2009}.
\end{proof}
Figure \ref{fig:trees} illustrates such a construction. The nodes are numbered 1 to 5. The subscripts at node 1 and 3 are the feature index used in the nodes. Since the equality constraints in {\bf (IP)} can be separated based on the trees, we consider only one tree and one example being routed to node 4 on the tree for simplicity. The equality constraints can be organized in the matrix form as shown in the middle of Figure \ref{fig:trees}. Through row operations, the constraint matrix can be transformed to an equivalent network matrix. Such transformation always works as long as the leaf nodes are arranged in a pre-order manner. Next, we deal with the inequality constraints and obtain our main result. 

\tikzset{every tree node/.style={minimum width=1em,draw,circle},
         blank/.style={draw=none},
         edge from parent/.style=
         {draw, edge from parent path={(\tikzparentnode) -- (\tikzchildnode)}},
         level distance=1cm}

\tikzset{every tree node/.style={minimum width=1em,draw,circle},
         blank/.style={draw=none},
         edge from parent/.style=
         {draw, edge from parent path={(\tikzparentnode) -- (\tikzchildnode)}},
         level distance=1cm}
\renewcommand{\kbldelim}{(}
\renewcommand{\kbrdelim}{)}
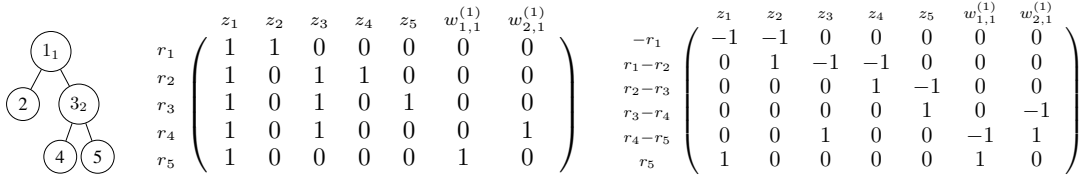
\begin{figure}
\subcaptionbox*{}{
\begin{tikzpicture}[scale=.7]
\Tree
[.\ensuremath{1_1} 
    [.2 ]    
    [.\ensuremath{3_2} 
    	[.4 ]
    	[.5 ]
    ]
]
\end{tikzpicture}}
\hspace{-1.5cm}
\subcaptionbox*{}{
\resizebox{!}{.1\hsize}{
$$
\kbordermatrix{
    & z_1 & z_2 & z_3 & z_4 & z_5 & w^{(1)}_{1,1} & w^{(1)}_{2,1}\\
    r_1 & 1 & 1 & 0 & 0 & 0 & 0 & 0 \\
    r_2 & 1 & 0 & 1 & 1 & 0 & 0 & 0 \\
    r_3 & 1 & 0 & 1 & 0 & 1 & 0 & 0 \\
    r_4 & 1 & 0 & 1 & 0 & 0 & 0 & 1 \\
    r_5 & 1 & 0 & 0 & 0 & 0 & 1 & 0
  }
$$}
}
\hspace{-1.5cm}
\subcaptionbox*{}{
\resizebox{!}{.1\hsize}{
$$
\kbordermatrix{
    & z_1 & z_2 & z_3 & z_4 & z_5 & w^{(1)}_{1,1} & w^{(1)}_{2,1}\\
    -r_1 & -1 & -1 & 0 & 0 & 0 & 0 & 0 \\
    r_1-r_2 & 0 & 1 & -1 & -1 & 0 & 0 & 0 \\
    r_2-r_3 & 0 & 0 & 0 & 1 & -1 & 0 & 0 \\
    r_3-r_4 & 0 & 0 & 0 & 0 & 1 & 0 & -1 \\
    r_4-r_5 & 0 & 0 & 1 & 0 & 0 & -1 & 1 \\
    r_5 & 1 & 0 & 0 & 0 & 0 & 1 & 0
  }
$$}
}
\vspace{-.5cm}
\caption{A decision tree example with node numbers and associated feature in subscripts together with the constraint matrix and its equivalent network matrix form.}\label{fig:trees}
\vspace{-.4cm}
\end{figure}

\begin{theorem}\label{theorem}
The LP relaxation of {\bf (IP)}, where the 0-1 integer constraints are relaxed to interval constraints $[0,1]$ for all integer variables, has integral optimal solutions.
\end{theorem}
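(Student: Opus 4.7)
The plan is to show the constraint matrix of \textbf{(IP)} is totally unimodular (TU). Since the right-hand side of \textbf{(IP)} consists only of $0$'s and $1$'s and the bound constraints $[0,1]$ are integer, the Hoffman--Kruskal theorem then yields that every vertex of the LP relaxation's feasible polytope is integer, and so the bounded LP attains an integer optimum, which in turn solves \textbf{(IP)}.

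By Lemma~\ref{lemma1}, after appropriate row operations the equality constraints of each tree form a network matrix. Because no variable is shared between the equality constraints of distinct trees, the stacked equality system is block diagonal with network-matrix blocks, hence itself TU. The essential step is thus to show that appending the coupling rows $w^{(t)}_{k,i}-w_{k,i}\le 0$ preserves TU. Each such row has only two nonzero entries, a $+1$ in column $w^{(t)}_{k,i}$ and a $-1$ in column $w_{k,i}$, giving it the same shape as a directed arc; this is what makes the extension plausible.

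To verify TU of the enlarged matrix I would invoke the Ghouila--Houri criterion: for every subset $R$ of rows, exhibit a partition $R=R_1\cup R_2$ such that every column's signed row sum lies in $\{-1,0,1\}$. The construction first pairs each inequality row $(t,k,i)$ in $R$, when possible, with the equality row of tree $t$ whose entry in column $w^{(t)}_{k,i}$ is $-1$, placing the two in the same class so that their contributions to column $w^{(t)}_{k,i}$ cancel; the remaining equality rows of each tree are then placed using the standard 2-coloring available for a network matrix; finally, the residual freedom in placing unpaired inequality rows is used to balance each $w_{k,i}$ column, which receives only $-1$ contributions, by distributing the inequality rows sharing a common $w_{k,i}$ as evenly as possible between $R_1$ and $R_2$. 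The main obstacle is the combinatorial case analysis showing that the per-tree cancellation requirements on each $w^{(t)}_{k,i}$ column and the per-feature balancing requirement on each $w_{k,i}$ column can always be satisfied simultaneously; this exploits two structural facts: each variable $w^{(t)}_{k,i}$ appears in exactly one coupling inequality (so at most one pairing constraint applies to it), and each $w_{k,i}$ column has nonzero entries only in inequality rows, all of sign $-1$ (so its balance condition reduces to a simple parity argument). Once TU is established, integrality of the LP optimum follows immediately, completing the proof.
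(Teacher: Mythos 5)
Your overall strategy coincides with the paper's: establish total unimodularity of the full constraint matrix of {\bf (IP)} by combining the per-tree network matrices from Lemma~\ref{lemma1} (stacked block-diagonally) with the coupling rows $w^{(t)}_{k,i}-w_{k,i}\le 0$, then conclude integrality of the relaxation. The genuine gap is that the verification step --- which is the entire mathematical content of the theorem --- is never carried out: you explicitly defer ``the combinatorial case analysis showing that the per-tree cancellation requirements \dots and the per-feature balancing requirement \dots can always be satisfied simultaneously,'' and that is exactly the claim that needs proving. Moreover, the partition recipe you sketch works against itself. In the network form of a tree's equality block, every column (each $z^{(t)}_h$ and the equality part of each $w^{(t)}_{k,i}$) has exactly one $+1$ and one $-1$, and these must sit in the \emph{same} row class for their signed contributions to cancel; the only ``standard 2-coloring'' of a network matrix is to keep all its rows together. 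Your proposal to pull the equality row carrying the $-1$ in column $w^{(t)}_{k,i}$ into the class of the corresponding coupling row separates it from its partner rows, and a $z$-column passing through the moved row then accumulates a signed sum of $\pm 2$. So the ``residual freedom'' you invoke is far more constrained than the sketch suggests, and it is not evident the route you describe can be completed.

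The paper's construction is simpler and sidesteps this conflict entirely: keep \emph{all} equality rows (in network form) in one class $Q_1$, so that every column's contribution from the equality block is $0$ whenever both of its entries are present; then order the coupling rows so that, for each fixed $(k,i)$, the rows $w^{(t)}_{k,i}\le w_{k,i}$, $t\in[T]$, are consecutive (an interval structure in the $w_{k,i}$ columns), and alternate them between $Q_1$ and $Q_2$. Each $w^{(t)}_{k,i}$ column then picks up a single $\pm1$ from its unique coupling row, and each $w_{k,i}$ column --- whose nonzeros are exactly the $-1$'s in that consecutive block --- sums to $0$ or $-1$ by the alternation; the paper then appeals to Theorem 2.7 and Propositions 2.1--2.2 of Part~3 of Nemhauser and Wolsey. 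If you want to finish your argument, adopt a fixed partition of this type (no pairing of coupling rows with individual equality rows) and check the column sums, rather than leaving the satisfiability of your competing requirements as an acknowledged obstacle; as written, your submission is a plan for a proof, not a proof.
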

Due to space limit the proof can be found in the Appendix. The main idea is to show the constraints are still \emph{totally unimodular} even after adding the coupling constraints and the LP relaxed polyhedron has only integral extreme points \cite{nemhauser1978analysis}. 
As a result, solving the LP relaxation results in the optimal solution to the integer program {\bf (IP)}, allowing for polynomial time optimization. \footnote{The nice result of totally unimodular constraints is due to our specific formulation. See Appendix for an alternative formulation that does not have such a property.}
\begin{algorithm}[hb]
\caption{{\textbf{\textsc{BudgetPrune}}}}\label{algo:BudgetPrune}
\begin{algorithmic}[1]
\Algphase{During Training: input - ensemble($\cT_1,\dots,\cT_T$), training/validation data with labels, $\lambda$}
\State initialize dual variables $\beta_{k,i}^{(t)} \leftarrow 0$.
\State update $z^{(t)}_h,w^{(t)}_{k,i}$ for each tree $t$ (shortest-path algo). $w_{k,i}=0$ if $\mu_{k,i}>0$, $w_{k,i}=1$ if $\mu_{k,i}<0$.
\State $\beta_{k,i}^{(t)} \leftarrow [\beta_{k,i}^{(t)}+ \gamma (w_{k,i}^{(t)}-w_{k,i})]_+$ for step size $\gamma$, where $[\cdot]_+=\max\{0,\cdot\}$.
\State go to Step 2 until duality gap is small enough. 
\Algphase{During Prediction: input - test exmaple $\mathbf{x}$}
\State Run $\mathbf{x}$ on each tree to leaf, obtain the probability distribution over label classes $\mathbf{p}_t$ at leaf.
\State Aggregate $\mathbf{p}=\frac{1}{T}\sum_{t=1}^{T}\mathbf{p}_t$. Predict the class with the highest probability in $\mathbf{p}$. \label{algo:predictionRule}
\end{algorithmic}
\end{algorithm}
\section{A Primal-Dual Algorithm}
Even though we can solve {\bf (IP)} via its LP relaxation, the resulting LP can be too large in practical applications for any general-purpose LP solver. In particular, the number of variables and constraints is roughly $O(T \times |\cT_{\text{max}}|+N \times T \times K_{\text{max}})$, where $T$ is the number of trees; $|\cT_{\text{max}}|$ is the maximum number of nodes in a tree; $N$ is the number of examples; $K_{\text{max}}$ is the maximum number of features an example uses in a tree. The runtime of the LP thus scales $O(T^{3})$ with the number of trees in the ensemble, limiting the application to only small ensembles. In this section we propose a primal-dual approach that effectively decomposes the optimization into many sub-problems. Each sub-problem corresponds to a tree in the ensemble and can be solved efficiently as a shortest path problem.  The runtime per iteration is $O(\frac{T}{p}(|\cT_{\text{max}}|+N \times K_{\text{max}})\log(|\cT_{\text{max}}|+N \times K_{\text{max}}))$, where $p$ is the number of processors. We can thus massively parallelize the optimization and scale to much larger ensembles as the runtime depends only linearly on $\frac{T}{p}$.
To this end, we assign dual variables $\beta_{k,i}^{(t)}$ for the inequality constraints $ w^{(t)}_{k,i} \leq w_{k,i}$ and derive the dual problem. 
\begin{equation*}
\hspace{-.3cm}\begin{array}{rlll}
\displaystyle \max_{\beta_{k,i}^{(t)}\geq 0} \min_{\substack{z^{(t)}_h\in [0,1] \\ w^{(t)}_{k,i}\in [0,1] \\w_{k,i}\in [0,1]}} & \multicolumn{2}{l}{\frac{1}{NT}\displaystyle  \sum_{t=1}^{T}\sum_{h\in \mathcal{T}_t} \hat{e}^{(t)}_h z^{(t)}_h +\lambda \left( \sum_{k=1}^{K}c_k(\frac{1}{N}\sum_{i=1}^{N}w_{k,i}) \right) + \sum_{t=1}^T \sum_{i=1}^N \sum_{k\in K_{t,i}} \beta_{k,i}^{(t)}(w_{k,i}^{(t)}-w_{k,i})}\\
\textrm{s.t.} &  \displaystyle \sum_{u\in p(h)} z^{(t)}_u=1, & \forall h \in \tilde{\mathcal{T}}_t, \forall t \in [T], \\
&  w^{(t)}_{k,i}+ \displaystyle \sum_{h\in p(u_{t,k,i})} z^{(t)}_h=1 , & \forall k\in K_{t,i},\forall i\in S, \forall t \in [T], \\
\end{array}
\end{equation*}
where for simplicity we have combined coefficients of $z_h^{(t)}$ in the objective of {\bf (IP)} to $\hat{e}_h^{(t)}$.
The primal-dual algorithm is summarized in Algorithm \ref{algo:BudgetPrune}. It alternates between updating the primal and the dual variables. The key is to observe that given dual variables, the primal problem (inner minimization) can be decomposed for each tree in the ensemble and solved in parallel as shortest path problems due to Lemma \ref{lemma1}. (See also Appendix). The primal variables $w_{k,i}$ can be solved in closed form: simply compute $\mu_{k,i}=\lambda c_k / N-\sum_{t\in T_{k,i}} \beta_{k,i}^{(t)}$, where $T_{k,i}$ is the set of trees in which example $i$ encounters feature $k$. So $w_{k,i}$ should be set to 0 if $\mu_{k,i}>0$ and $w_{k,i}=1$ if $\mu_{k,i}<0$.

Note that our prediction rule aggregates the leaf distributions from all trees instead of just their predicted labels. In the case where the leaves are pure (each leaf contains only one class of examples), this prediction rule coincides with the majority vote rule commonly used in random forests. Whenever the leaves contain mixed classes, this rule takes into account the prediction confidence of each tree in contrast to majority voting. Empirically, this rule consistently gives lower prediction error than majority voting with pruned trees.

\section{Experiments}
We test our pruning algorithm \textsc{BudgetPrune} on four benchmark datasets used for prediction-time budget algorithms. The first two datasets have unknown feature acquisition costs so we assign costs to be 1 for all features; the aim is to show that \textsc{BudgetPrune} successfully selects a sparse subset of features on average to classify each example with high accuracy. \footnote{In contrast to traditional sparse feature selection, our algorithm allows adaptivity, meaning different examples use different subsets of features.} The last two datasets have real feature acquisition costs measured in terms of CPU time. \textsc{BudgetPrune} achieves high prediction accuracy spending much less CPU time in feature acquisition. 

For each dataset we first train a RF and apply \textsc{BudgetPrune} on it using different $\lambda$'s to obtain various points on the accuracy-cost tradeoff curve.  We use in-bag data to estimate error probability at each node and the validation data for the feature cost variables $w_{k,i}$'s. We implement \textsc{BudgetPrune} using CPLEX \cite{cplex} network flow solver for the primal update step. The running time is significantly reduced (from hours down to minutes) compared to directly solving the LP relaxation of {\bf (IP)} using standard solvers such as Gurobi \cite{gurobi}. Futhermore, the standard solvers simply break trying to solve the larger experiments whereas \textsc{BudgetPrune} handles them with ease. 
We run the experiments for 10 times and report the means and standard deviations.
 
\noindent
{\bf Competing methods:} We compare against four other approaches. ({\bf i}) \textsc{BudgetRF}\cite{icml2015_nan15}: the recursive node splitting process for each tree is stopped as soon as node impurity (entropy or Pairs) falls below a threshold. The threshold is a measure of impurity tolerated in the leaf nodes. This can be considered as a naive pruning method as it reduces feature acquisition cost while maintaining low impurity in the leaves. 
\begin{wrapfigure}{r}{0.8\textwidth}
\vspace{-0.35cm}
\centering
\subcaptionbox{MiniBooNE}{\includegraphics[width=.49\linewidth]{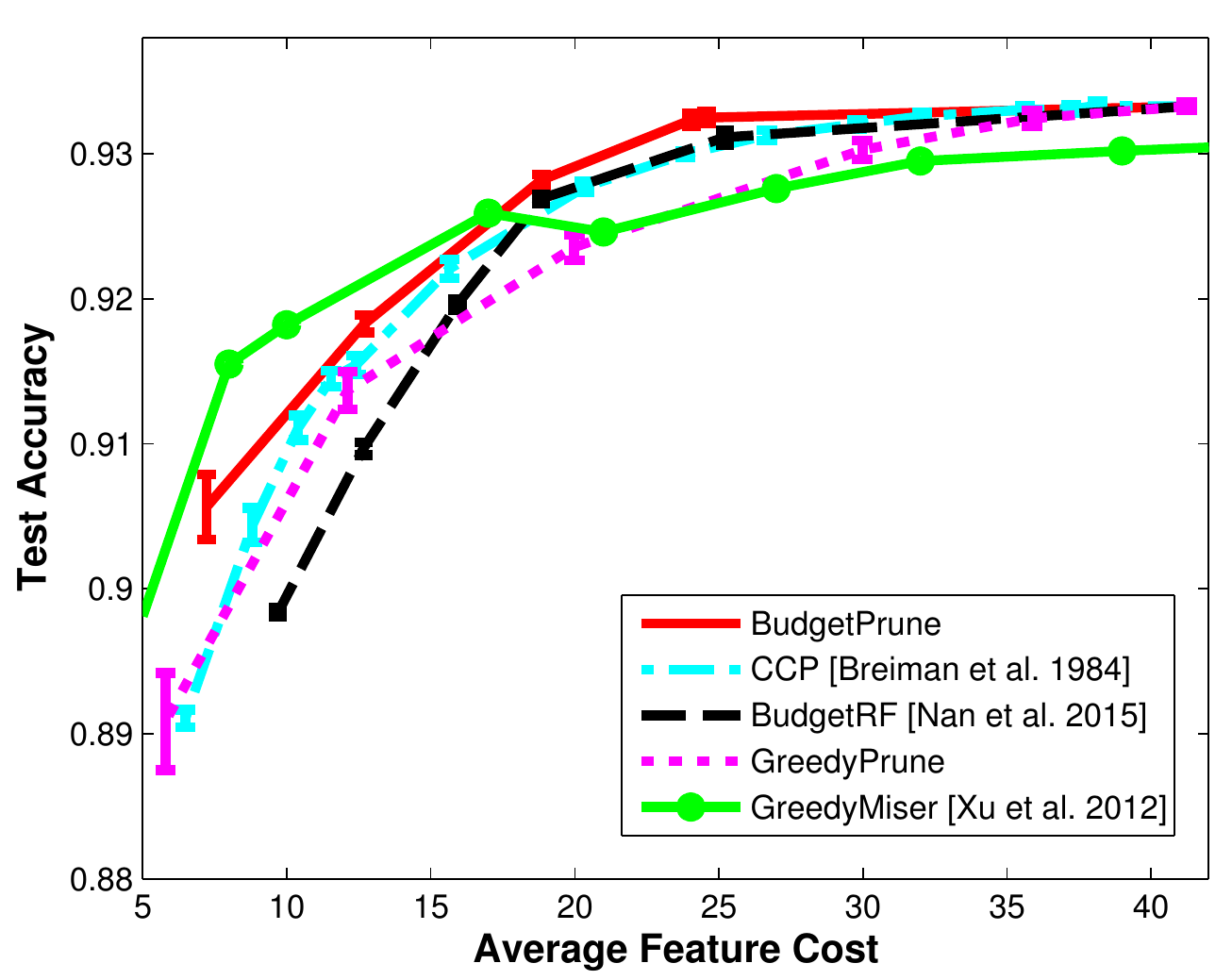}}
\subcaptionbox{Forest Covertype}{\includegraphics[width=.49\linewidth]{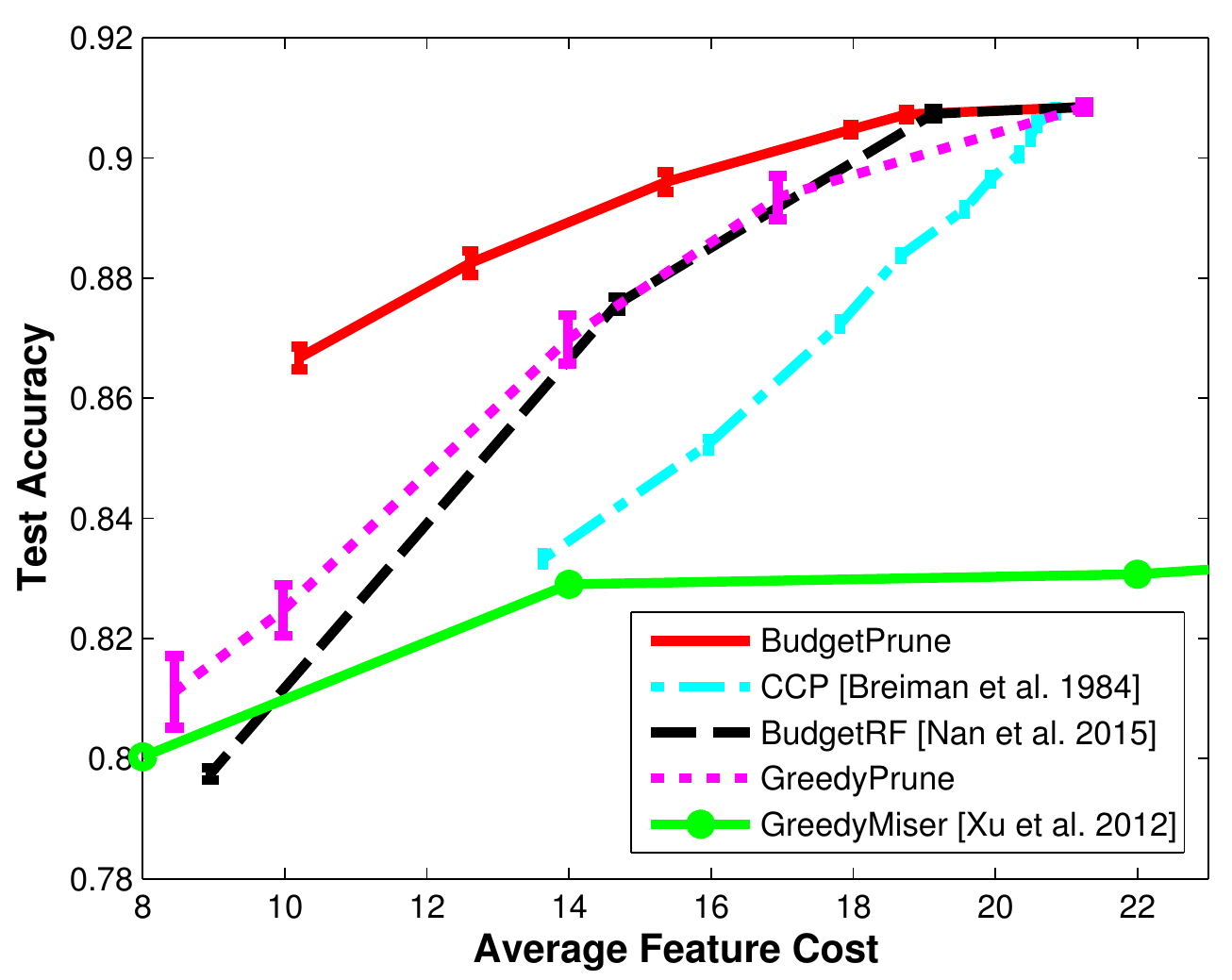}} \\
\vspace{-.1cm}
\subcaptionbox{Yahoo! Rank}{\includegraphics[width=.49\linewidth]{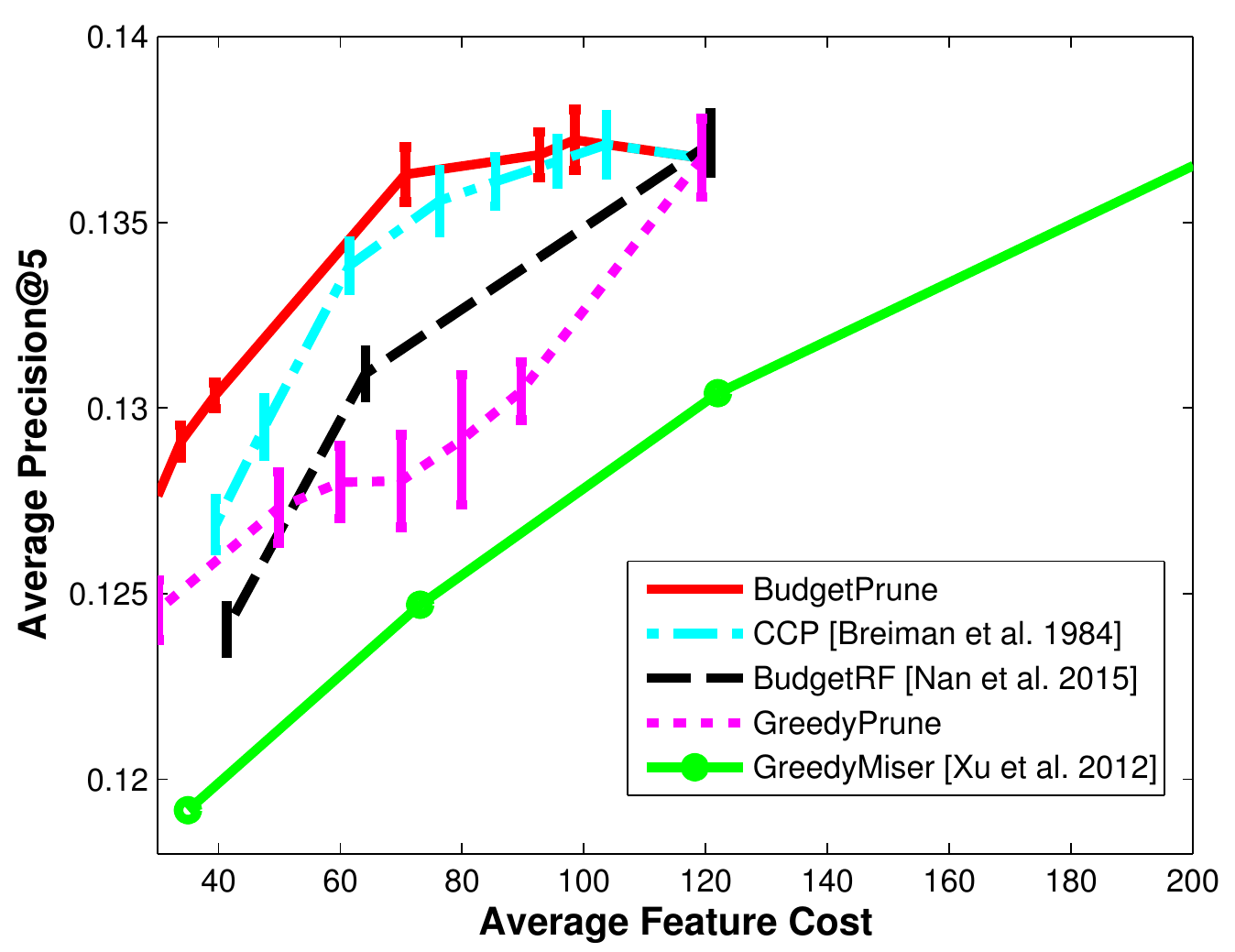}} 
\subcaptionbox{Scene15}{\includegraphics[width=.49\linewidth]{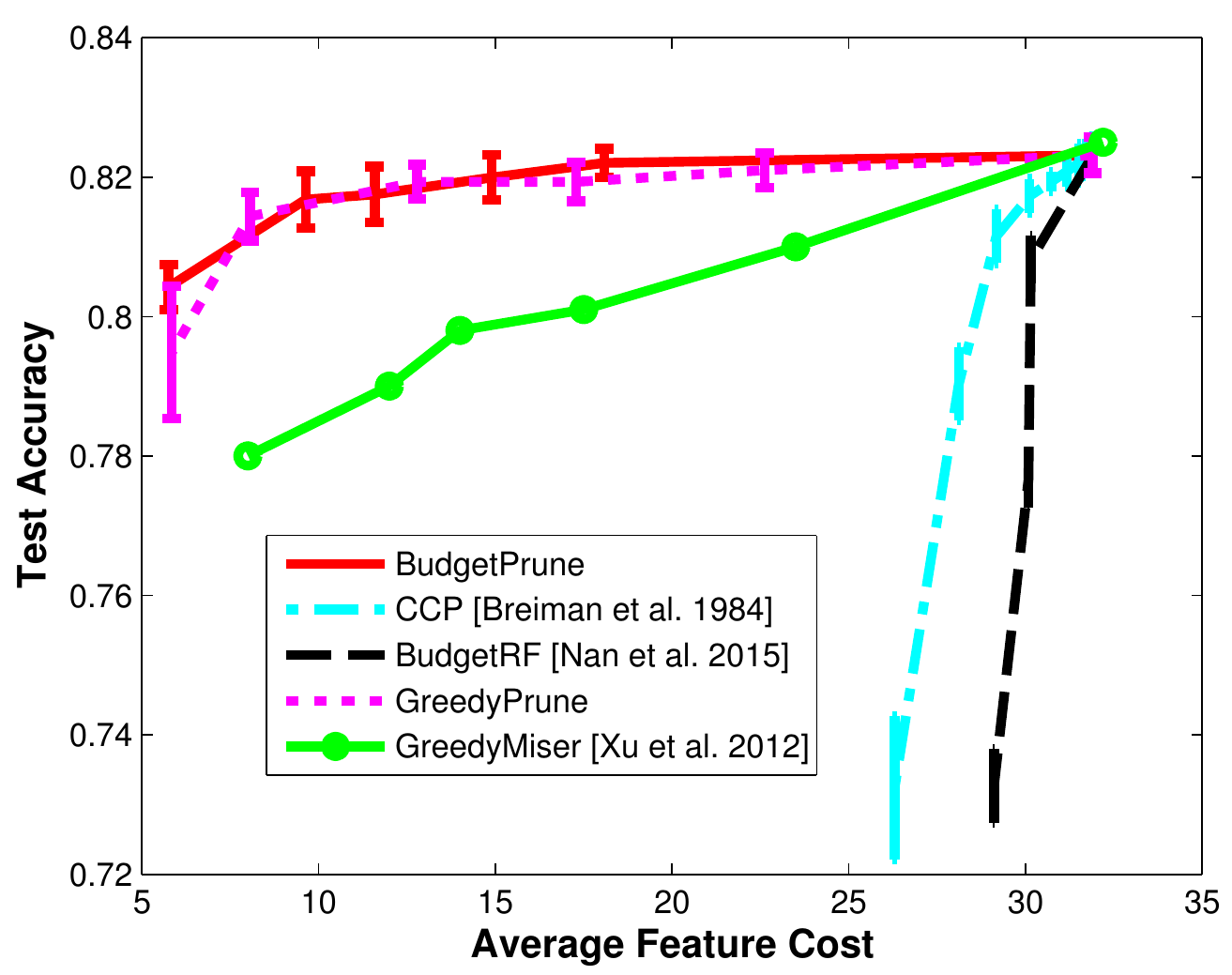}}
\vspace{-.25cm}
\caption{\footnotesize Comparison of \textsc{BudgetPrune} against CCP, \textsc{BudgetRF} with early stopping, \textsc{GreedyPrune} and \textsc{GreedyMiser}  on 4 real world datasets. \textsc{BudgetPrune} (red) outperforms competing state-of-art methods. \textsc{GreedyMiser} dominates ASTC~\cite{ASTC_AAAI14}, CSTC~\cite{xu2013cost} and DAG~\cite{NIPS2015_5982} significantly on all datasets. We omit them in the plots to clearly depict the differences between competing methods.}
\label{fig:experiments}
\vspace{-0.1in}
\end{wrapfigure}
({\bf ii}) Cost-Complexity Pruning (CCP) \cite{breiman1984classification}: it iteratively prunes subtrees such that the resulting tree has low error and small size. We perform CCP on individual trees to different levels to obtain various points on the accuracy-cost tradeoff curve. CCP does not take into account feature costs. ({\bf iii}) \textsc{GreedyPrune}: is a greedy global feature pruning strategy that we propose; at each iteration it attempts to remove all nodes corresponding to one feature from the RF such that the resulting pruned RF has the lowest training error and average feature cost. The process terminates in at most K iterations, where K is the number of features. The idea is to reduce feature costs by successively removing features that result in large cost reduction yet small accuracy loss. We also compare against the state-of-the-art methods in budgeted learning ({\bf iv}) \textsc{GreedyMiser} \cite{DBLP:conf/icml/XuWC12}: it is a modification of gradient boosted regression tree \cite{Friedman00greedyfunction} to incorporate feature cost. Specifically, each weak learner (a low-depth decision tree) is built to minimize squared loss with respect to current gradient at the training examples plus feature acquisition cost. To build each weak learner the feature costs are set to zero for those features already used in previous weak learners. Other prediction-time budget algorithms such as ASTC \cite{ASTC_AAAI14}, CSTC \cite{xu2013cost} and cost-weighted $l$-1 classifiers are shown to perform strictly worse than \textsc{GreedyMiser} by a significant amount \cite{ASTC_AAAI14,icml2015_nan15} so we omit them in our plots. Since only the feature acquisition costs are standardized, for fair comparison we do not include the computation cost term in the objective of {\bf (IP)} and focus instead on feature acquisition costs.


\noindent
{\bf MiniBooNE Particle Identification and Forest Covertype Datasets:\cite{UCI_repository}} Feature costs are uniform in both datasets. Our base RF consists of 40 trees using entropy split criteria and choosing from the full set of features at each split. 
As shown in (a) and (b) of Figure \ref{fig:experiments}, \textsc{BudgetPrune} (in red) achieves the best accuracy-cost tradeoff. The advantage of \textsc{BudgetPrune} is particularly large in (b). \textsc{GreedyMiser} has lower accuracy in the high budget region compared to \textsc{BudgetPrune} in (a) and significantly lower accuracy in (b). The gap between \textsc{BudgetPrune} and other pruning methods is small in (a) but much larger in (b). This indicates large gains from globally encouraging feature sharing in the case of (b) compared to (a). In both datasets, \textsc{BudgetPrune} successfully prunes away large number of features while maintaining high accuracy. For example in (a), using only 18 unique features on average instead of 40, we can get essentially the same accuracy as the original RF.

\noindent
{\bf Yahoo! Learning to Rank:\cite{YahooChallenge2010}}
This ranking dataset consists of $473134$ web documents and $19944$ queries. Each example in the dataset contains features of a query-document pair together with the relevance rank of the document to the query. There are $141397/146769/184968$ examples in the training/validation/test sets. There are 519 features for each example; each feature is associated with an acquisition cost in the set $\{1,5,20,50,100,150,200\}$, which represents the units of CPU time required to extract the feature and is provided by a Yahoo! employee. The labels are binarized so that the document is either relevant or not relevant to the query. 
The task is to learn a model that takes a new query and its associated set of documents to produce an accurate ranking using as little feature cost as possible. 
As in \cite{icml2015_nan15}, we use the Average Precision@5 as the performance metric, which gives a high reward for ranking the relevant documents on top. Our base RF consists of 140 trees using cost weighted entropy split criteria as in \cite{icml2015_nan15} and choosing from a random subset of 400 features at each split. As shown in (c) of Figure \ref{fig:experiments}, \textsc{BudgetPrune} achieves similar ranking accuracy as \textsc{GreedyMiser} using only 30\% of its cost. 

\noindent
{\bf Scene15 \cite{scene15}:}
This scene recognition dataset contains 4485 images from 15 scene classes (labels). Following \cite{DBLP:conf/icml/XuWC12} we divide it into $1500/300/2685$ examples for training/validation/test sets. We use a diverse set of visual descriptors and object detectors from the Object Bank \cite{objectBank}. We treat each individual detector as an independent descriptor so we have a total of 184 visual descriptors. The acquisition costs of these visual descriptors range from 0.0374 to 9.2820. For each descriptor we train 15 one-vs-rest kernel SVMs and use the output (margins) as features. Once any feature corresponding to a visual descriptor is used for a test example, an acquisition cost of the visual descriptor is incurred and subsequent usage of features from the same group is free for the test example. 
Our base RF consists of 500 trees using entropy split criteria and choosing from a random subset of 20 features at each split. As shown in (d) of Figure \ref{fig:experiments}, \textsc{BudgetPrune} and \textsc{GreedyPrune} significantly outperform other competing methods. \textsc{BudgetPrune} has the same accuracy at the cost of 9 as at the full cost of 32. \textsc{BudgetPrune} and \textsc{GreedyPrune} perform similarly, indicating the greedy approach happen to solve the global optimization in this particular initial RF. 

\subsection{Discussion \& Concluding Comments} We have empirically evaluated several resource constrained learning algorithms including \textsc{BudgetPrune} and its variations on benchmarked datasets here and in the Appendix. We highlight key features of our approach below. \\
({\bf i}) \textsc{State-of-the-art Methods}. Recent work has established that \textsc{GreedyMiser} and \textsc{BudgetRF} are among the state-of-the-art methods dominating a number of other methods~\cite{ASTC_AAAI14,xu2013cost,NIPS2015_5982} on these benchmarked datasets. \textsc{GreedyMiser} requires building class-specific ensembles and tends to perform poorly and is increasingly difficult to tune in multi-class settings. RF, by its nature, can handle multi-class settings efficiently. On the other hand, as we described earlier, \cite{ASTC_AAAI14,NIPS2015_5982,xu2013cost} are fundamentally "tree-growing" approaches, namely they are top-down methods acquiring features sequentially based on a surrogate utility value. This is a fundamentally combinatorial problem that is known to be NP hard~\cite{DecisionTreesforEntityIdentification,xu2013cost} and thus requires a number of relaxations and heuristics with no guarantees on performance. In contrast our pruning strategy is initialized to realize good performance (RF initialization) and we are able to globally optimize cost-accuracy objective. \\
({\bf ii}) \textsc{Variations on Pruning}. By explicitly modeling feature costs, \textsc{BudgetPrune} outperforms other pruning methods such as early stopping of \textsc{BudgetRF} and CCP that do not consider costs. \textsc{GreedyPrune} performs well validating our intuition (see Table.~1) that pruning sparsely occurring feature nodes utilized by large fraction of examples can improve test-time cost-accuracy tradeoff. Nevertheless, the \textsc{BudgetPrune} outperforms \textsc{GreedyPrune}, which is indicative of the fact that apart from obvious high-budget regimes, node-pruning must account for how removal of one node may have an adverse impact on another downstream one. \\
({\bf iii}) \textsc{Sensitivity to Impurity, Feature Costs, \& other inputs}. We explore these issues in Appendix. We experiment \textsc{BudgetPrune} with different impurity functions such as entropy and Pairs \cite{icml2015_nan15} criteria. Pairs-impurity tends to build RFs with lower cost but also lower accuracy compared to entropy and so has poorer performance. We also explored how non-uniform costs can impact cost-accuracy tradeoff. An elegant approach has been suggested by \cite{benbouzid:tel-00990245}, who propose an adversarial feature cost proportional to feature utility value. We find that \textsc{BudgetPrune} is robust with such costs. Other RF parameters including number of trees and feature subset size at each split do impact cost-accuracy tradeoff in obvious ways with more trees and moderate feature subset size improving prediction accuracy while incurring higher cost.

To conclude, our proposed formulation possesses 1) elegant theoretical properties, 2) an algorithm  scalable to large problems and 3) superior empirical performance.

\paragraph{Acknowledgment} We thank Dr Kilian Weinberger for helpful discussions and Dr David Castanon for the insights on the primal dual algorithm.

\section{Appendix}
\subsection{A Naive Pruning Formulation}
The nice property of totally unimodular constraints in Theorem 3.2 is due to our specific formulation. 
Here we present an alternative integer program formulation and show its deficiency. Recall we defined the following node variables 
$$
z_h=\left\{
\begin{array}{rl}
1 & \text{if node } h \text{ is a leaf in the pruned tree} ,\\
0 & \text{otherwise}.
\end{array} \right.
$$
and indicator variables of feature usage:
$$
w_{k,i}=\left\{\begin{array}{rl}
1 & \text{ if feature } k \text{ is used by }\bx^{(i)} \text{ in any } \cT_t, t=1,\dots,T\\
0 & \text{ otherwise}.
\end{array} \right.
$$
First, note that if $z_h=1$ for some node $h$, then the examples that are routed to $h$ must have used all the features in the predecessor nodes $p(h)$, excluding $h$. We use $k\sim p(h)$ to denote feature $k$ is used in any predecessor of $h$, excluding $h$. Then for each feature $k$ and example $i$, we must have $w_{k,i}\geq z_h$ for all nodes $h$ such that $i\in S_h$ and $k\sim p(h)$. Combining these constraints with the pruning constraints we formulate pruning as a 0-1 integer program for an individual tree:
\begin{equation*}
\begin{array}{rlll}
\displaystyle \min_{\substack{z_h \in \{0,1\} \\w_{k,i}\in \{0,1\}}} & \multicolumn{2}{l}{\frac{1}{N}\displaystyle  \sum_{h\in \mathcal{N}} e_h z_h +\lambda \sum_{k=1}^{K}c_k(\frac{1}{N}\sum_{i=1}^{N}w_{k,i})} \\
\textrm{s.t.} & z_h+ \sum_{u\in p(h)} z_u=1 & \forall h \in \tilde{\mathcal{T}}, \\
& w_{k,i}\geq z_h & \forall h:i\in S_h \land k\sim p(h), \\
& & \forall k\in [K], \forall i\in S.   \\
\end{array}
\label{eq:IP1}
\end{equation*}
To solve the integer program, a common heuristic is to solve its linear program relaxation. Unfortunately, the constraint set in the above formulation has fractional extreme points, leading to possibly fractional solutions to the relaxed problem. It is not clear how to perform rounding to obtain good prunings. Consider the first tree in Figure \ref{fig:trees}. Feature 1 is used at the root node and feature 2 is used at node 3. There are 7 variables (assuming there is only one example and it goes to leaf 4): $z_1,z_2,z_3,z_4,z_5,w_{1,1},w_{2,1}$. 
The LP relaxed constraints are:
\begin{align*}
& z_1+z_3+z_4=1 , z_1+z_3+z_5=1 , z_1+z_2=1, \\
& w_{1,1}\geq z_4, w_{1,1}\geq z_3, w_{2,1}\geq z_4, 0\leq z\leq 1.
\end{align*}
The following is a basic feasible solution:
\begin{equation*}
z_1=0,  z_2=1 , z_3=z_4=z_5=0.5, w_{1,1}=w_{2,1}=0.5,
\end{equation*}
because the following set of 7 constraints are active:
\begin{align*}
& z_1+z_3+z_4=1, z_1+z_3+z_5=1, \\
& w_{1,1}\geq z_4, w_{1,1}\geq z_3,w_{2,1}\geq z_4, z_1=0,z_2=1.
\end{align*}
Even if we were to interpret the fractional solution of $z_h$ as probabilities of $h$ being a leaf node, we see an issue with this formulation: the example has $0.5$ probability of stopping at node 3 or 4 ($z_3=z_4=0.5$). In both cases, feature 1 at the root node has to be used, however $w_{1,1}=0.5$ indicates that it is only being used half of the times. This solution is not a feasible pruning and fails to capture the cost of the pruning.

Attempting to use an LP relaxation of this formulation fails to capture the desired behavior of the integer program. In the main paper we propose a better integer program formulation and show that solving the LP relaxation yields the optimal solution to the integer program.

\subsection{Transformation to Network Matrices and Shortest Path Problems}
To illustrate the transformation to network matrix in Lemma 3.1, we provide the following illustration in Figure \ref{fig:trees}. Note in the main paper we have shown the example of the first tree. For simplicity we consider only one example being routed to nodes 4 and 11 respectively on the two trees. The equality constraints in (IP2) can be separated based on the trees and put in matrix form:
\renewcommand{\kbldelim}{(}
\renewcommand{\kbrdelim}{)}
\[
\kbordermatrix{
    & z_1 & z_2 & z_3 & z_4 & z_5 & w^{(1)}_{1,1} & w^{(1)}_{2,1}\\
    r_1 & 1 & 1 & 0 & 0 & 0 & 0 & 0 \\
    r_2 & 1 & 0 & 1 & 1 & 0 & 0 & 0 \\
    r_3 & 1 & 0 & 1 & 0 & 1 & 0 & 0 \\
    r_4 & 1 & 0 & 1 & 0 & 0 & 0 & 1 \\
    r_5 & 1 & 0 & 0 & 0 & 0 & 1 & 0
  },
\]
for tree 1 and 
\[
\kbordermatrix{
    & z_6 & z_7 & z_8 & z_9 & z_{10} & z_{11} & z_{12} & w^{(2)}_{2,1} & w^{(2)}_{3,1}\\
    r_1 & 1 & 1 & 1 & 0 & 0 & 0 & 0 & 0 & 0\\
    r_2 & 1 & 1 & 0 & 1 & 0 & 0 & 0 & 0 & 0\\
    r_3 & 1 & 0 & 0 & 0 & 1 & 1 & 0 & 0 & 0\\
    r_4 & 1 & 0 & 0 & 0 & 1 & 0 & 1 & 0 & 0\\
    r_5 & 1 & 0 & 0 & 0 & 1 & 0 & 0 & 0 & 1\\
    r_6 & 1 & 0 & 0 & 0 & 0 & 0 & 0 & 1 & 0
  },
\]
for tree 2. Through row operations they can be turned into network matrices, where there is exactly two non-zeros in each column, a 1 and a $-1$.
\[
\kbordermatrix{
    & z_1 & z_2 & z_3 & z_4 & z_5 & w^{(1)}_{1,1} & w^{(1)}_{2,1}\\
    -r_1 & -1 & -1 & 0 & 0 & 0 & 0 & 0 \\
    r_1-r_2 & 0 & 1 & -1 & -1 & 0 & 0 & 0 \\
    r_2-r_3 & 0 & 0 & 0 & 1 & -1 & 0 & 0 \\
    r_3-r_4 & 0 & 0 & 0 & 0 & 1 & 0 & -1 \\
    r_4-r_5 & 0 & 0 & 1 & 0 & 0 & -1 & 1 \\
    r_5 & 1 & 0 & 0 & 0 & 0 & 1 & 0
  },
\]
for tree 1 and 
\[
 \kbordermatrix{
    & z_6 & z_7 & z_8 & z_9 & z_{10} & z_{11} & z_{12} & w^{(2)}_{2,1} & w^{(2)}_{3,1}\\
    -r_1 & -1 & -1 & -1 & 0 & 0 & 0 & 0 & 0 & 0\\
    r_1-r_2 & 0 & 0 & 1 & -1 & 0 & 0 & 0 & 0 & 0\\
    r_2-r_3 & 0 & 1 & 0 & 1 & -1 & -1 & 0 & 0 & 0\\
    r_3-r_4 & 0 & 0 & 0 & 0 & 0 & 1 & -1 & 0 & 0\\
    r_4-r_5 & 0 & 0 & 0 & 0 & 0 & 0 & 1 & 0 & -1\\
    r_5-r_6 & 0 & 0 & 0 & 0 & 1 & 0 & 0 & -1 & 1\\
    r_6 & 1 & 0 & 0 & 0 & 0 & 0 & 0 & 1 & 0
  }
\]
for tree 2.
Note the above transformation to network matrices can always be done as long as the leaf nodes are arranged in a pre-order fashion. 

In the primal-dual algorithm, the inner minimization can be decomposed to shortest path problems corresponding to individual trees. Figure \ref{fig:shortestpath} illustrates such a construction based on the network matrices shown above. The nodes in the graphs correspond to rows in the network matrices and the arcs correspond to the columns, which are the primal variables $z_h, w^{(t)}_{k,i}$'s. There is a cost associated with each arc in the objective of the minimization problem. The task is to find a path from the first node (source) to the last node (sink) such that the sum of arc costs is minimized. Note each path from source to sink corresponds to a feasible pruning. For example, in (a) of Figure \ref{fig:shortestpath}, consider the path of 1-2-5-6, the active arcs are $z_2, z_3$ and $w^{(1)}_{1,1}$, Setting these variables to 1 and others to 0, we see that it corresponds to pruning Tree 1 at node 3 in Figure \ref{fig:trees}. (Note the nodes in Figure \ref{fig:shortestpath} and Figure \ref{fig:trees} are not to be confused - they do not have a relation with each other. )

\begin{figure} 
\vspace{-0.35cm}
\centering
\subcaptionbox{Tree 1}{\includegraphics[width=\linewidth,height=.3\linewidth]{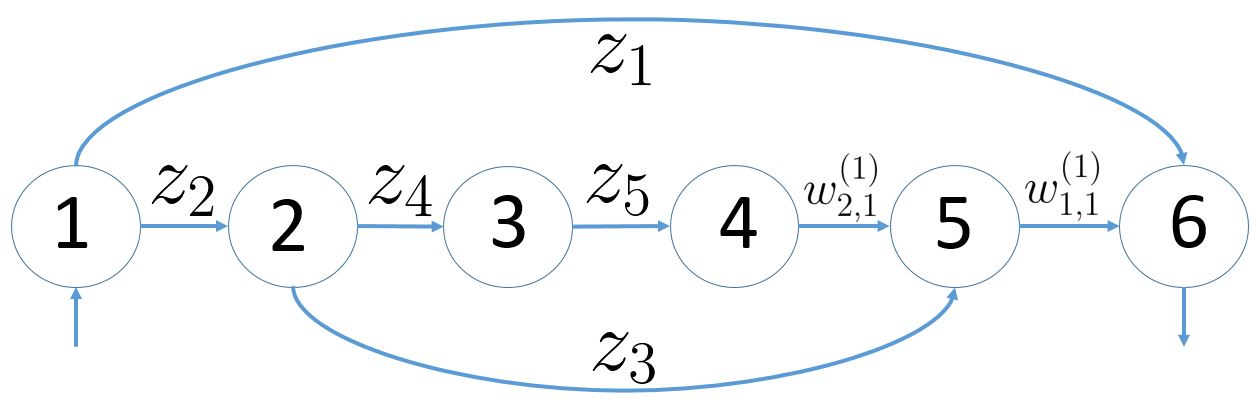}}
\subcaptionbox{Tree 2}{\includegraphics[width=\linewidth,height=.3\linewidth]{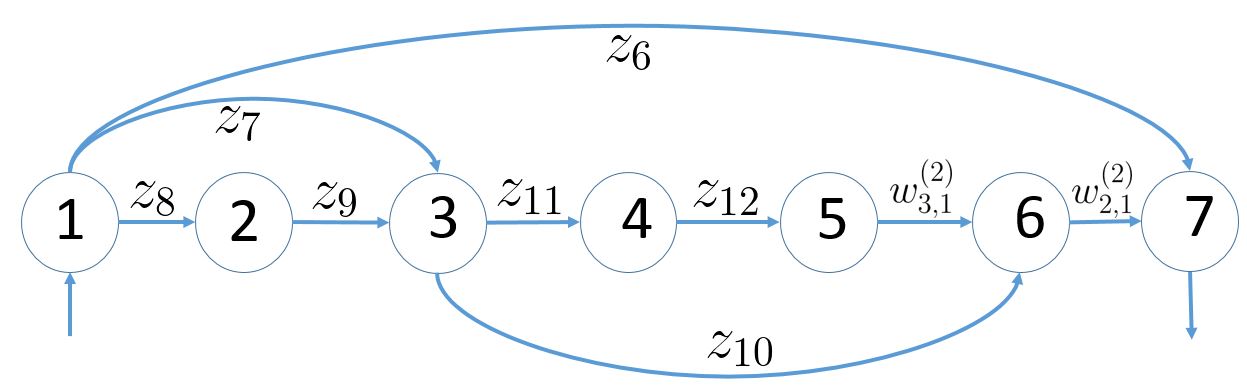}} \\
\caption{Turning pruning to equivalent shortest path problems.} \label{fig:shortestpath}
\end{figure}

\subsection{Proof of Theorem 3.2}
Denote the equality constraints of (IP) with index set $J_1$. They can be divided into each tree. 
Each constraint matrix in $J_1$ associated with a tree can be turned into a network matrix according to Lemma 3.1. Stacking these matrices leads to a larger network matrix. Denote the $w^{(t)}_{k,i}\leq w_{k,i}$ constraints with index set $J_2$. Consider the constraint matrix for $J_2$. Each $w^{(t)}_{k,i}$ only appears once in $J_2$, which means the column corresponding to $w^{(t)}_{k,i}$ has only one element equal to 1 and the rest equal to 0. If we arrange the constraints in $J_2$ such that for any given $k,i$ $w^{(t)}_{k,i}\leq w_{k,i}$ are put together for $t\in [T]$, the constraint matrix for $J_2$ has interval structure such that the non-zeros in each column appear consecutively. Finally, putting the network matrix from $J_1$ and the matrix from $J_2$ together. Assign $J_1$ and the odd rows of $J_2$ to the first partition $Q_1$ and assign the even rows of $J_2$ to the second partition $Q_2$. Note the upper bound constraints on the variables can be ignored as this is an minimization problem. We conclude that the constraint matrix of (IP) is totally unimodular according to Theorem 2.7, Part 3 of \cite{Nemhauser:1988:ICO:42805} with partition $Q_1$ and $Q_2$. By Proposition 2.1 and 2.2, Part 3 of \cite{Nemhauser:1988:ICO:42805} we can conclude the proof.

\tikzset{every tree node/.style={minimum width=1em,draw,circle},
         blank/.style={draw=none},
         edge from parent/.style=
         {draw, edge from parent path={(\tikzparentnode) -- (\tikzchildnode)}},
         level distance=1cm}
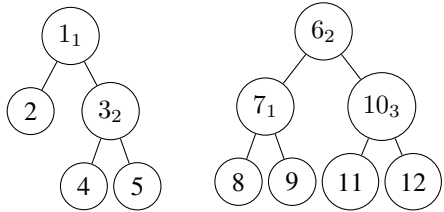
\begin{figure}
\centering
\subcaptionbox*{}{
\begin{tikzpicture}
\Tree
[.\ensuremath{1_1} 
    [.2 ]    
    [.\ensuremath{3_2} 
    	[.4 ]
    	[.5 ]
    ]
]
\end{tikzpicture}}
~
~
~
\subcaptionbox*{}{
\begin{tikzpicture}
\Tree
[.\ensuremath{6_2} [.\ensuremath{7_1} [.8 ] [.9 ] ]  [.\ensuremath{10_3} [.11 ] [.12 ] ] ]
\end{tikzpicture}}
\caption{An ensemble of two decision trees with node numbers and associated feature in subscripts}\label{fig:trees2}
\end{figure}

\subsection{Additional Details of Experiments}
In this section we provide additional details of the experiment setup and explore how some parameter choices may affect \textsc{BudgetPrune}.

\paragraph{Additional details of datasets}
The MiniBooNE data set is a binary classification task to distinguish electron neutrinos from muon neutrinos. There are $45523/19510/65031$ examples in training/validation/test sets. Each example has 50 features, each with unit cost. 
The Forest data set contains cartographic variables to predict 7 forest cover types. There are $36603/15688/58101$ examples in training/validation/test sets. Each example has 54 features, each with unit cost. We use 1000 trees for \textsc{GreedyMiser} and search over learning rates in $[10^{-5}, 10^2]$ for MiniBooNE and Forest.
The Yahoo and Scene15 datasets have actual feature acquisition costs in terms of CPU time. We use 3000 trees for \textsc{GreedyMiser} and search over learning rates in $[10^{-5}, 1]$. We use the multi-class logistic loss for Scene15 and the squared loss for other datasets in \textsc{GreedyMiser}.
For the Scene15 dataset, we use a diverse set of visual discriptors varying in computation time: GIST, spatial HOG, Local Binary Pattern, self-similarity, texton histogram, geometric texton, geometric color and 177 object detectors from the Object Bank \cite{objectBank}. We treat each individual detector as an independent descriptor so we have 184 different visual descriptors in total. The acquisition costs of these visual descriptors range from 0.0374 to 9.2820. For each descriptor we train 15 one-vs-rest kernel SVMs and use the output (margins) as features. The best classifier based on individual descriptors achieves an accuracy of 77.8\%. Note the features are grouped based on the visual descriptors. Once any feature corresponding to a visual descriptor is used for a test example, an acquisition cost of the visual descriptor is incurred and subsequent usage of features from the same group is free for the test example. 

Next, we perform additional experiments to evaluate \textsc{BudgetPrune} with different costs, input RFs.
\paragraph{Non-uniform cost on MiniBooNE}
We observe that CCP performs similarly to \textsc{BudgetPrune} on MiniBooNE when the costs are uniform in the case of entropy splitting criteria, indicating little gain from global optimization with respect to feature usage. We suspect that uniform feature costs work in favor of CCP because there's no loss in treating each feature equally. To confirm this intuition we assign the features non-uniform costs and re-run prunings on the same RF. We first normalize the data so that the data vectors corresponding to the features have the same $l$-2 norm. 
We then train a linear SVM on it and obtain the weight vector corresponding to the learned hyperplane. We around the absolute values of the weights and make them the costs for the features. Intuitively the feature with higher weight tends to be more relevant for the classification task so we assign it a higher acquisition cost. The resulting costs lie in the range of $[1,40]$ and we normalize them so that the sum of all feature costs is 50 - the number of features. We plot \textsc{BudgetPrune} and CCP for uniform cost as well as the non-uniform cost described above in Figure \ref{fig:mbne_nonuniform}. \textsc{BudgetPrune} still achieves similar performance as uniform cost while CCP performance drops significantly with non-uniform feature cost. This shows again the importance of taking into account feature costs in the pruning process.

\begin{figure}[htbp]
\centering
\includegraphics[width=0.5\textwidth]{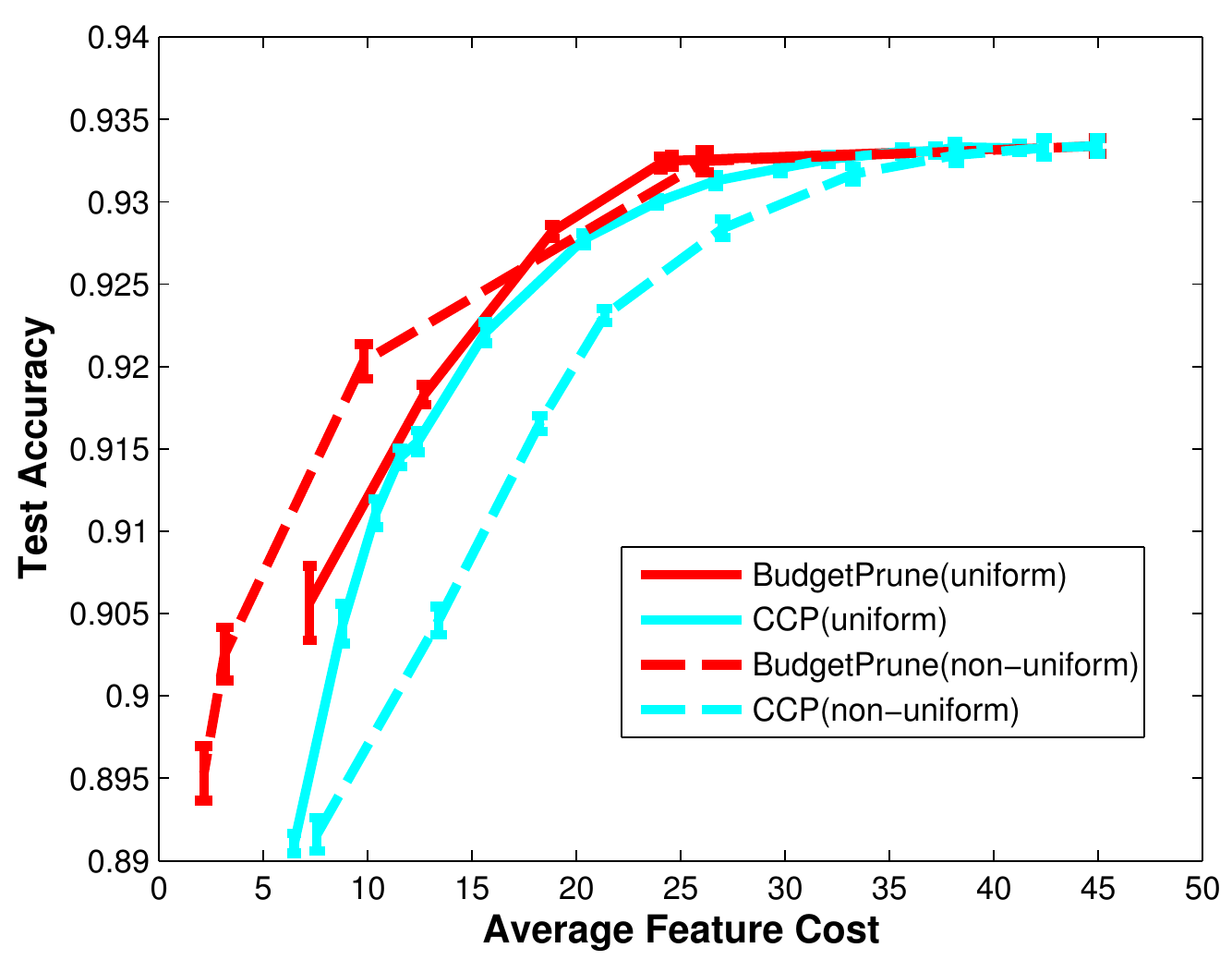}
\caption{Comparing \textsc{BudgetPrune} and CCP with uniform and non-uniform feature cost on MiniBooNE dataset. \textsc{BudgetPrune} is robust when the feature cost is non-uniform.} \label{fig:mbne_nonuniform}
\end{figure}
\paragraph{Entropy Vs Pairs}
How does \textsc{BudgetPrune} depend on the splitting criteria used in the underlying random forest? On two data sets we build RFs using the popular entropy splitting criteria and the mini-max Pairs criteria used in \cite{icml2015_nan15} and the results are shown in Figure \ref{fig:entropy_pair}. We observe that entropy splitting criteria lead to RFs with higher accuracy while the Pairs criteria lead to RFs with lower cost. This is expected as using Pairs biases to more balanced splits and thus provably low cost \cite{icml2015_nan15}. In (a) of Figure \ref{fig:entropy_pair} we observe that as more of the RF is pruned away \textsc{BudgetPrune} and CCP results for entropy and Pairs coincide. This suggests that the two criteria actually lead to similar tree structures in the initial tree-building process. However, as the trees are built deeper their structures diverge. Plot (b) in Figure \ref{fig:entropy_pair} shows that pruning based on the RFs from the Pairs criteria can achieve higher accuracy in the low cost region. But if high accuracy in the high cost region is desirable then the entropy criteria should be used. 

\begin{figure}[htbp]
\centering
\subcaptionbox*{MiniBooNE}{\includegraphics[width=.48\linewidth,height=.4\linewidth]{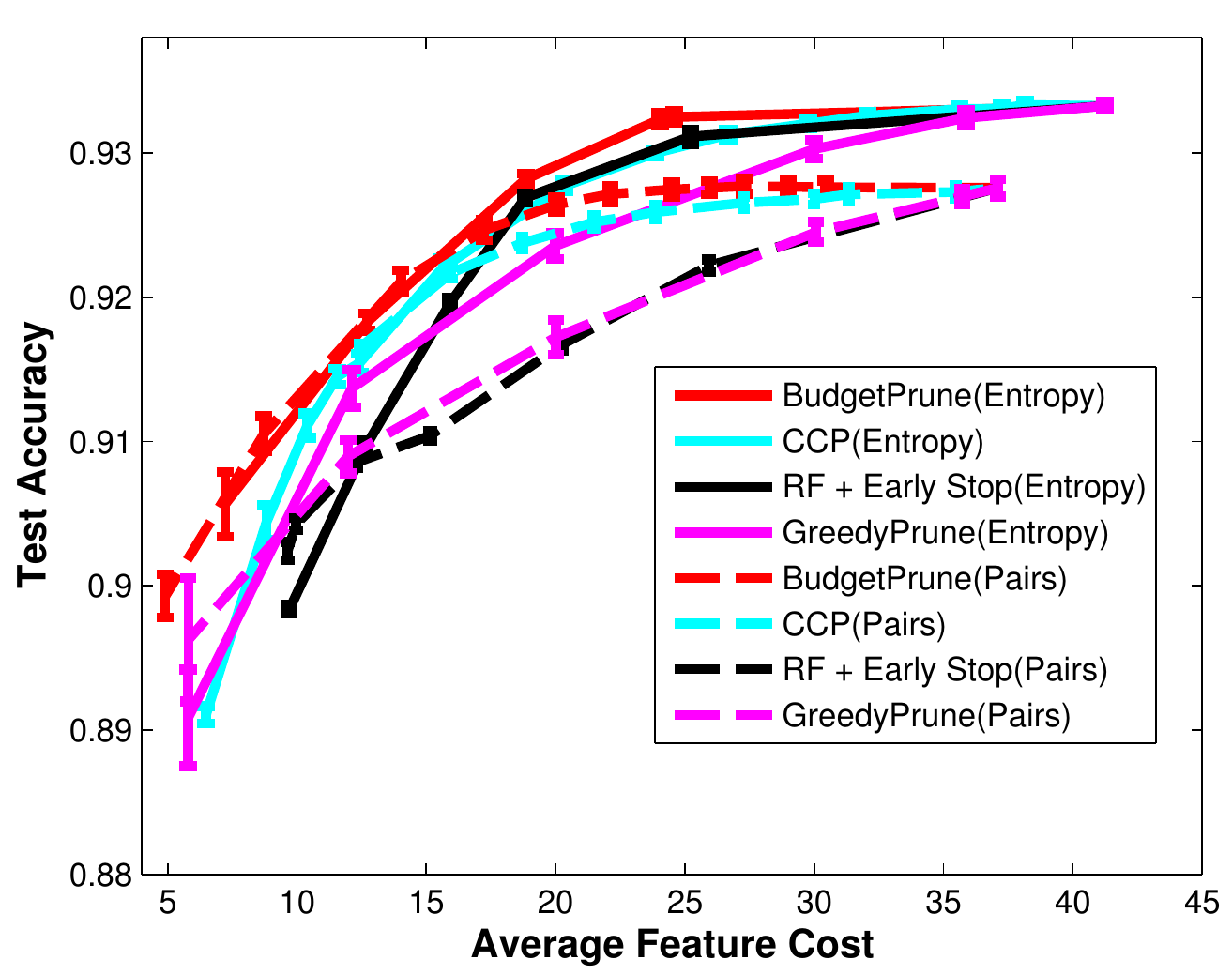}}
\subcaptionbox*{Forest Covertype}{\includegraphics[width=.48\linewidth,height=.4\linewidth]{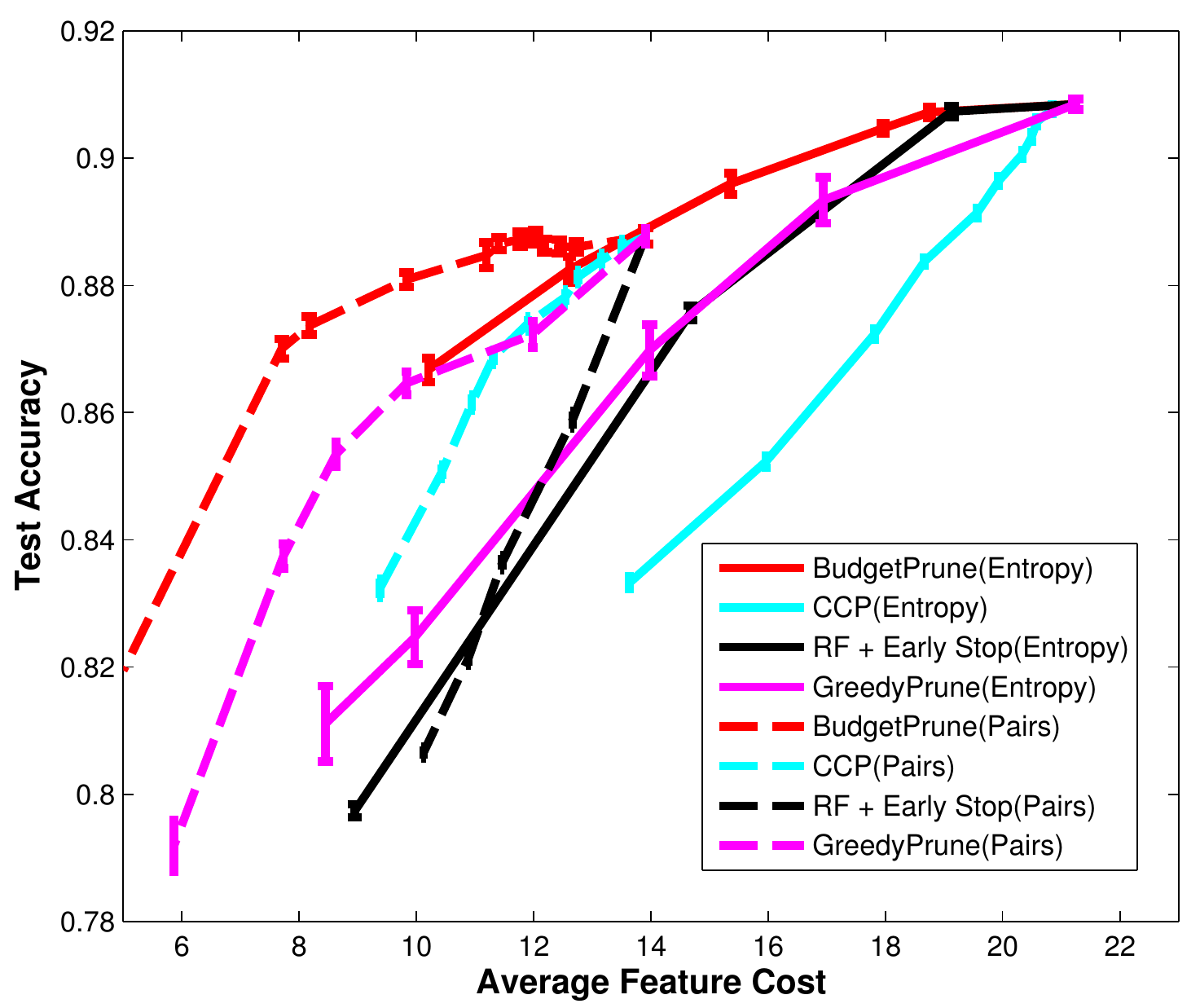}}
\caption{Comparisons of various pruning methods based on entropy and Pairs splitting criteria on MiniBooNE and Forest datasets} \label{fig:entropy_pair}
\end{figure}

\begin{figure}[htbp]
\centering
\includegraphics[width=0.5\textwidth]{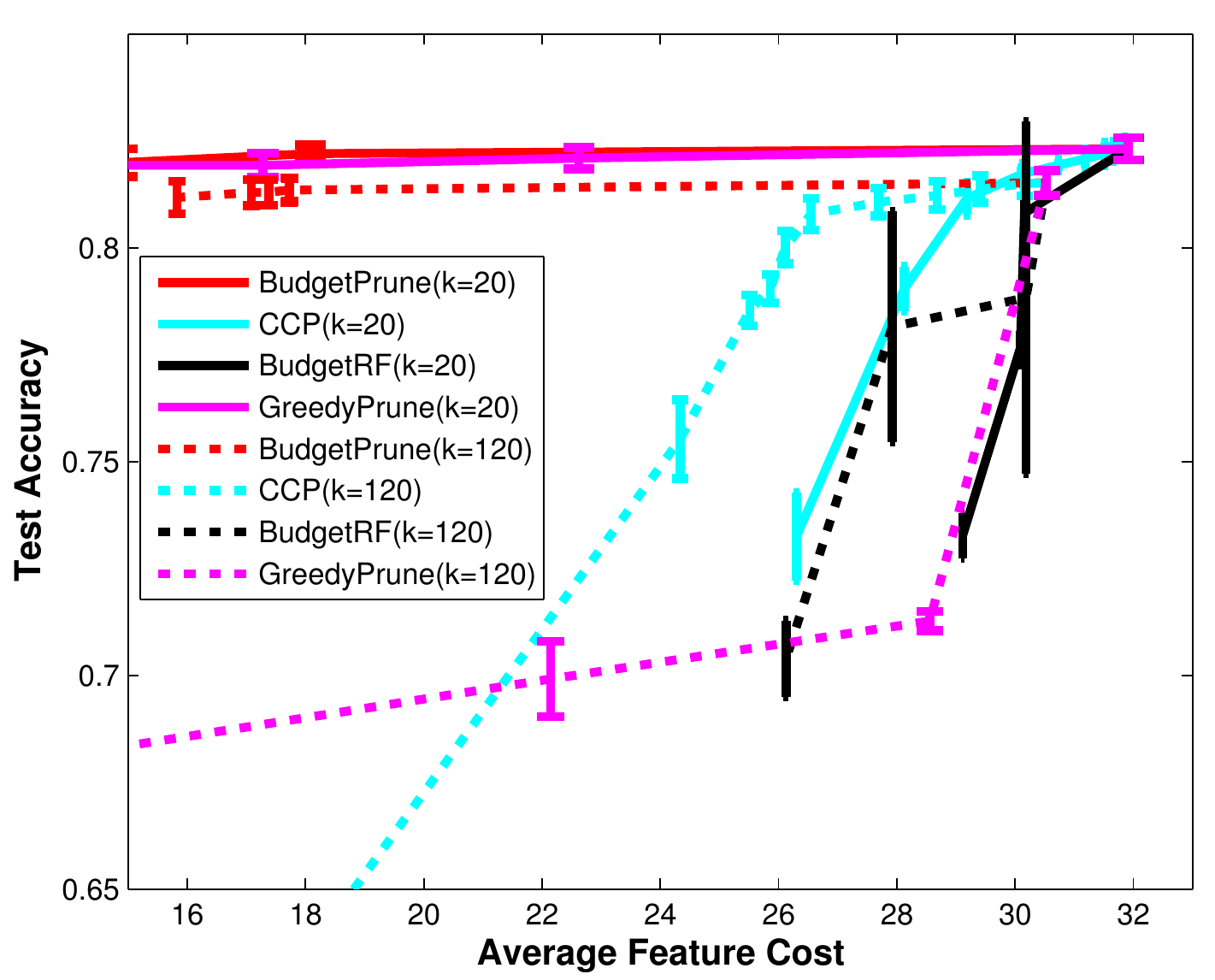}
\caption{Comparing various pruning approaches on RF built with k=20 and k=120 on Scene15 dataset. The initial RF has higher accuracy and higher cost for k=20. \textsc{GreedyPrune} performs very well in k=20 but very poorly in k=120.} \label{fig:scene15_k_20_120}
\end{figure}
\paragraph{Size of random feature subset at each split}
At each split in RF building, it is possible to restrict the choice of splitting feature to be among a random subset of all features. Such restriction tends to further reduce correlation among trees and gain prediction accuracy. The drawback is that test examples tend to encounter a diverse set of features, increasing feature acquisition cost. For illustration purpose, we plot various pruning results on Scene15 dataset for feature subset sizes $k=20$ and $k=120$ in Figure \ref{fig:scene15_k_20_120}. The initial RF has higher accuracy and higher cost for $k=20$ as expected. \textsc{BudgetPrune} achieves slightly better accuracy in $k=20$ than $k=120$. Note also how \textsc{GreedyPrune} performance drops significantly for $k=120$ so it is not robust. 
In our main experiments $k$ is chosen on validation data to achieve highest accuracy for the initial RF.

\clearpage
\bibliography{cost_sensitive_bib}
\bibliographystyle{plain}

\end{document}